\def \etal {{\textit{et al}.\thinspace}}
\newtheorem{lemma}{Lemma}
\newcommand\re[2]
\newcommand\minre[1]
\newcommand\vica[1]{{\textcolor{black}{#1}}}
\begin{document}

\title{Fast Wrong-way Cycling Detection in CCTV Videos: Sparse Sampling is All You Need}


\author{ Jing Xu$^{12\dag}$, Wentao Shi$^{23\dag}$, Sheng Ren$^{2}$, Lijuan Zhang$^{4}$, Weikai Yang$^{1*}$, Pan Gao$^{5*}$ and Jie Qin$^{5}$
\thanks{This work was supported in part by the National Natural Science Foundation of China under Grants 62502413, 62272227.}
\thanks{$^{1}$ Data Science and Analytics Thrust, Information Hub, 
The Hong Kong University of Science and Technology (Guangzhou), Guangzhou, China.
        {\tt\small \{jing.xu@connect., weikaiyang@\}hkust-gz.edu.cn}}%

\thanks{$^{2}$ The College of Computer Science and Technology, Nanjing University of Aeronautics and Astronautics, Nanjing, China.
        {\tt\small \{jing.xu, shiwentao, rensheng\}@nuaa.edu.cn}}%
\thanks{$^{3}$ The School of Computer Science, Nanjing University, Nanjing, China.
        {\tt\small wentao.shi@smail.nju.edu.cn}}%
\thanks{$^{4}$ The College of Electronic and Information Engineering, Nanjing University of Aeronautics and Astronautics, Nanjing, China.
        {\tt\small lijuan.zhang@nuaa.edu.cn}}%
\thanks{$^{5}$ The College of Artificial Intelligence, Nanjing University of Aeronautics and Astronautics, Nanjing, China.
        {\tt\small \{pan.gao, jie.qin\}@nuaa.edu.cn}}%
\thanks{$\dag$Co-first author}%
\thanks{*Co-corresponding author}
}

\markboth{Accepted by IEEE Transactions on Intelligent Transportation Systems in Dec. 2025}%
{Shell \MakeLowercase{\textit{et al.}}: A Sample Article Using IEEEtran.cls for IEEE Journals}

\IEEEpubid{0000--0000/00\$00.00~\copyright~2021 IEEE}


\maketitle

\begin{abstract}
Effective monitoring of unusual transportation behaviors, such as wrong-way cycling (i.e., riding a bicycle or e-bike against designated traffic flow), is crucial for optimizing law enforcement deployment and traffic planning.
However, accurately recording all wrong-way cycling events is both unnecessary and infeasible in resource-constrained environments, as it requires high-resolution cameras for evidence collection and event detection.
To address this challenge, we propose WWC-Predictor, a novel method for efficiently estimating the wrong-way cycling ratio, defined as the proportion of wrong-way cycling events relative to the total number of cycling movements over a given time period.
The core innovation of our method lies in accurately detecting wrong-way cycling events in sparsely sampled frames using a light-weight detector, then estimating the overall ratio using an autoregressive moving average model.
To evaluate the effectiveness of our method, we construct a benchmark dataset consisting of 35 minutes of video sequences with minute-level annotations.
Our method achieves an average error rate of a mere 1.475\% while consuming only 19.12\% GPU time required by conventional tracking methods, validating its effectiveness in estimating the wrong-way cycling ratio. Our source code is publicly available at: \href{https://github.com/VICA-Lab-HKUST-GZ/WWC-Predictor}{https://github.com/VICA-Lab-HKUST-GZ/WWC-Predictor}.
\end{abstract}

\begin{IEEEkeywords}
Wrong-way cycling, video analysis, tracking.
\end{IEEEkeywords}

\section{Introduction}

Wrong-way cycling refers to the behavior of riding a bicycle or e-bike in the opposite direction of the designated traffic flow. 
\vica{This behavior is a serious violation that significantly increases the risk of collisions, posing safety risks for both cyclists and other road users.}
\vica{While robust enforcement systems exist for motor vehicle violations, they typically rely on high-resolution CCTV cameras and significant computational resources for license plate recognition and violation evidence collection \cite{CCTV-video, 10725591}.}
\minre{The high cost and resource intensity of such systems restrict their widespread deployment for non-motorized transport monitoring, making them impractical for continuous, real-time bicycle tracking.}
\minre{Moreover, in typical urban CCTV views, bicycles and e-bikes appear much smaller and are more frequently occluded than motor vehicles, which makes multi-object tracking pipelines particularly prone to identity switches and orientation errors under low-resolution and low-frame-rate settings.}
\minre{As a result, tracking-based wrong-way cycling detection often requires high-resolution cameras and dense frame sampling to remain reliable.}

Given these constraints, our focus shifts from identifying and penalizing individual offenders to a macro-level safety management and transport system design.
Specifically, we focus on a key metric: the wrong-way cycling ratio, i.e., the proportion of wrong-way cycling instances relative to the total number of cycling movements over a given time period.
This ratio is crucial for assessing area-specific safety levels, identifying hotspots that require targeted interventions, and providing vital data for urban planning and safety optimization. 
The primary need, therefore, is to develop an efficient and scalable method to calculate this ratio across entire road networks, leveraging the resource-constrained CCTV feeds already widely deployed for general traffic observation.

\IEEEpubidadjcol

To address this need, we propose the $\mathbf{W}$rong-$\mathbf{W}$ay $\mathbf{C}$ycling Predictor (WWC-Predictor), a lightweight method designed to efficiently estimate the wrong-way cycling ratio using significantly fewer frames and fewer computational resources. 
This is achieved through sparse sampling supported by a Two-Frame WWC Detector, which precisely extracts orientation-based counts from each pair of frames.
To mitigate orientation errors caused by detection in sparse sampling (e.g., occlusion ambiguities), we introduce an ensemble method to cross-validate the cycling orientations detected in each pair of frames.
Subsequently, a temporal WWC estimator applies an ARMA model to convert validated frame-level counts into a video-level wrong-way cycling ratio\minre{, rather than relying on continuous tracking or fixed short-duration video clips.}
The extreme lightweight design enables real-time inference on edge devices \minre{and makes the proposed pipeline practical for large-scale monitoring even with low-resolution CCTV feeds,} while maintaining robust estimation.

To evaluate our method, and to foster future research on this task, we construct a benchmark containing 405 annotated images for \re{}{non-motor vehicle} detection task, 1199 images for orientation estimation task, and 4 fully annotated CCTV videos (35 minutes in total) for end-to-end validation.
The evaluation result on our benchmark demonstrates that WWC-Predictor achieves accuracy comparable to conventional tracking-based methods while requiring 6-10 times fewer frames and 4-6 times fewer computational resources, confirming the effectiveness of our method.

\vica{In summary, the contributions of our work includes
\begin{itemize}
    \item a two-frame wrong-way cycling detector, which robustly detects orientations of non-motor vehicles in each sparsely sampled frame pair,  
    \item a temporal wrong-way cycling estimator, which forecasts video-level wrong-way cycling ratios utilizing ARMA time-series model, and 
    \item a benchmark dedicated to the wrong-way cycling ratio estimation task.
\end{itemize}
}



\section{Related Work}
\subsection{Detection of Wrong-way Incidents}
Wrong-way driving represents a closely related area that has garnered significant attention. \minre{Current detection methods can be broadly categorized into GPS-based and surveillance-based approaches. Among these, GPS-based methods detect wrong-way cycling by exploiting location traces collected from smartphones or bicycle-mounted devices.} \re{}{Gu~\etal}\cite{gu2017bikemate} proposed BikeMate, a ubiquitous bicycling behavior monitoring system with smartphones. \re{}{Hayashi~\etal}\cite{hayashi2021vision} presented a mobile system that performed vision-based scene analysis to detect potentially dangerous cycling behavior including wrong-way cycling. \re{}{Dhakal~\etal}\cite{dhakal2018using} used data collected from a smartphone application to explain wrong-way riding behavior of cyclists on one-way segments to help better identify the demographic and network factors influencing the wrong-way riding decision making. \minre{However, these GPS-based methods assume that all cyclists carry and actively use smartphones with location services, which is unrealistic for large-scale, passive traffic monitoring, and may also raise privacy concerns when collecting fine-grained trajectory data. Consequently, these methods are not directly applicable to our scenario.} 

\minre{On the surveillance side,} \re{}{current studies} \cite{suttiponpisarn2022autonomous, shubho2021real, 10421458, 9230463, 9648579, manasa2023enhanced} have developed frameworks for the detection of wrong-way driving in CCTV footage. These frameworks employ a variety of multi-object tracking (MOT) methods, including FastMOT \cite{yukai_yang_2020_4294717}, DeepSORT \cite{SORT_2}, Kalman filter \cite{saho2017kalman} and centroid tracking \cite{9230463}, to analyze vehicular movements. 
For example, Suttiponpisarn~\etal \cite{suttiponpisarn2022autonomous} first utilizes FastMOT to segment video footage into one-minute intervals. This segmentation allows for the precise identification of vehicle start and end points through tracking models, with subsequent post-processing to determine vehicle orientation.

By contrast, other strategies involve analyzing entire video sequences, employing detection-based methods followed by post-processing to ascertain the direction of travel. \re{}{Suttiponpisarn~\etal}\cite{9648579,suttiponpisarn2022enhanced} developed a system utilizing YOLOv4-tiny \cite{bochkovskiy2020yolov4} to detect objects and DeepSORT tracking to track vehicles, in which two main algorithms called Road Lane Boundary detection from CCTV algorithm (RLB-CCTV) and Majority-Based Correct Direction Detection algorithm (MBCDD) are designed to reduce computational time.\re{}{Choudhari~\etal}\cite{choudhari2023traffic} introduced a continuous tracking method for monitoring the direction of motorbikes, comparing it against the expected direction of the lane. If the tracked orientation of a motorbike is contrary to the lane direction for at least 80\% of the observed time, it is identified as wrong-way riding.

In essence, the prevailing strategies for detecting wrong-way incidents predominantly rely on tracking methods. \re{While effective, these methods are noted for their time-intensive nature, underscoring a critical area for potential efficiency improvements.}{While effective, these methods are computationally intensive \cite{9648579,10421458}, often requiring significant GPU resources \cite{suttiponpisarn2022autonomous, 10725591} and processing time \cite{9230463}, which becomes particularly challenging when analyzing extensive CCTV footage from multiple cameras. \minre{Furthermore, because they depend on long-term multi-object tracking and accurate lane or boundary modeling, their performance can degrade under occlusions, low illumination, or complex maneuvers, leading to delayed or missed detection of truly dangerous wrong-way events. These computational and robustness issues represent critical limitations in real-world deployments and underscore the need for more efficient approaches that can maintain acceptable accuracy while reducing resource requirements.}}

\begin{figure*}[tb]
  \centering
  \includegraphics[width=\textwidth]{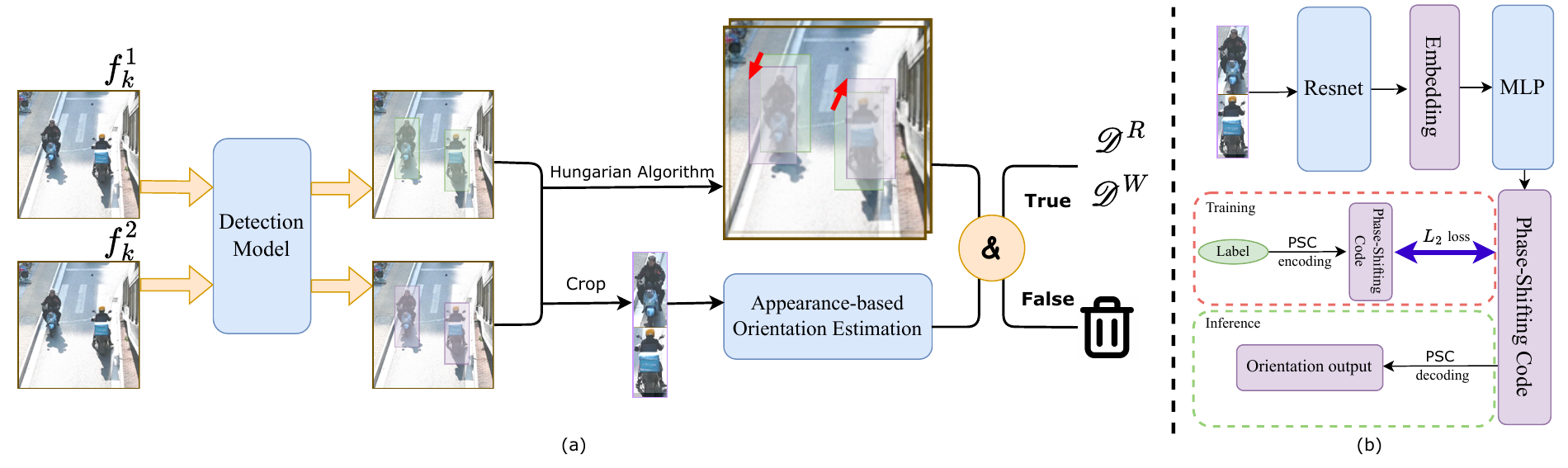}
  \caption{(a): Overview of Two-Frame Wrong-Way Cycling Detector, which consists of a detection model, an \re{}{appearance-based orientation} model and an And-strategy (shown as \(\&\) in the figure). \re{It takes two continuous frames as input, and output the amount of both right-way and wrong-way cycling.}{} (b): Training and inference pipeline of our proposed \re{}{appearance-based orientation estimation} model.}
  \label{Fig: ensemble}
\end{figure*}

\subsection{Orientation Detection}

Originally, detecting the orientation of non-motor vehicles in videos poses a dynamic challenge. However, within our sparse analysis framework, this issue transitions into a more static scenario, prompting us to delve into orientation detection within still images. Although direct analogues—tasks with an image input leading to an orientation output—are scarce, there exists a foundation of architectures addressing orientation in various contexts. Historically, orientation challenges have often manifested within the realm of oriented object detection \cite{Xie_Cheng_Wang_Yao_Han_2021, Han_Ding_Xue_Xia_2021, WEN2023119960}. This methodology aims not only to delineate the object with a bounding box but also to ascertain its orientation, thereby enhancing the precision of detection beyond the capabilities of conventional object detection techniques.

A novel contribution to this field is introduced by \re{}{Yu~\etal}\cite{Yu_2023_CVPR} through the development of a differentiable angle coder, termed the phase-shifting coder (PSC). This innovative approach tackles the issue of orientation cyclicity by translating the rotational periodicity across various cycles into the phase of differing frequencies, effectively addressing the rotation continuity problem. In our project, we incorporate the PSC to aid in resolving the challenges associated with orientation detection, leveraging its advanced capability to understand and quantify orientation within images accurately.




\subsection{Mathematical Modeling of Traffic Flow}

Given our focus on sparse time sequences, it is essential to establish connections among these sequences rather than treating them as isolated samples. The modeling of traffic flow serves as a pertinent example of this approach.

\re{}{Tian~\etal}\cite{tian2021emd} suggested modeling the frequency components of network traffic using the ARMA model, illustrating a method to grasp the temporal dynamics in data. Similarly, \re{}{Peng~\etal}\cite{peng2021short} introduced an ARIMA-SVM combined prediction model to forecast urban short-term traffic flow, demonstrating the efficacy of integrating traditional statistical models with machine learning techniques for enhanced predictive accuracy. These methodologies underscore the significance of training models on a set of pre-measured network traffic data, enabling the ARMA or ARIMA-SVM models to capture the intrinsic characteristics of traffic flows and, subsequently, to forecast network traffic effectively.

\re{}{Recent advances in Graph Signal Processing (GSP) have demonstrated significant potential for video analysis \cite{Giraldo2021Moving}. Fundamental to GSP is the extension of traditional signal processing concepts to irregular domains \cite{Shuman2013}, with ARMA graph filters \cite{Isufi_2017} providing a particularly relevant framework for our work. These techniques model signals evolving over graph structures, where temporal dynamics can be represented as graph signals along time-vertices \cite{Leus2023}, and ARMA graph filters capture recursive relationships in signal evolution \cite{Isufi_2017}. While our immediate application focuses on single-camera traffic flow, the GSP perspective offers pathways for future extension to multi-camera networks using graph-based fusion.}

In our analysis, although prediction is not our primary aim, we employ \re{similar techniques}{ ARMA model} to distill specific features that elucidate the relationships among samples. 

\section{\re{}{Method Overview}}
\vica{Taking a raw video \(\mathcal{V}\) as input, our method predicts the wrong-way cycling ratio through two integrated stages: the sparse detection and the temporal estimation.
In the sparse detection stage, we uniformly sample \(\mathcal{V}\) at fixed intervals \(T_{gap}\) to generate sequential frame pairs \(\mathcal{S} = \{ \mathcal{S}_0, \mathcal{S}_1, \ldots \}\), where each pair \(\mathcal{S}_k\) consists of two consecutive frames.
Each pair \(\mathcal{S}_k\) is processed by our \textbf{two-frame wrong-way cycling detector}, which produces sparse time-stamped counts of right-way cycling events \(\mathcal{D}^R_k\) and wrong-way cycling events \(\mathcal{D}^W_k\).
In the temporal estimation stage, the sparse detection results serve as input to our \textbf{temporal wrong-way cycling ratio estimator}, which models the number of events across the entire time period using an autoregressive moving average model to estimate the wrong-way cycling ratio.
This two-stage design enables efficient processing by combining targeted detection at key frames with statistical modeling for comprehensive temporal coverage.
}

\re{}{\section{Two-Frame Wrong-Way Cycling Detector}}

\vica{Given a two-frame pair \(\mathcal{S}_k = (f_k^1, f_k^2)\) from video \(\mathcal{V}\), our two-frame wrong-way cycling detector} determines vehicle orientations and classifies right-way/wrong-way cycling instances through a three-stage pipeline.
\vica{As illustrated in Figure \ref{Fig: ensemble}, our method integrates: 1) motion-based orientation estimation, which analyzes inter-frame object displacement to infer travel direction, 2) appearance-based orientation estimation, which leverages deep learning to predict orientation from vehicle appearance, and 3) ensemble validation, which cross-validates predictions from both methods to ensure robust classification.}


\subsection{\vica{Motion-Based Orientation Estimation}}

\vica{The motion-based orientation estimation method determines vehicle travel direction by analyzing displacement between consecutive frames.
This process involves three steps: vehicle detection, vehicle matching, and orientation calculation.}

\vica{\textbf{Vehicle Detection}.
The first step detects non-motor vehicles in each frame pair \(\mathcal{S}_k\) through a detection model, which generates bounding box sets \(\mathbf{B}_1 = D(f_k^1)\) and \(\mathbf{B}_2 = D(f_k^2)\) for each pair.
Here, $D(\cdot)$ represents the detection function applied to frames, and we employ YOLOv5 \cite{YOLOV5} due to its proven industrial efficacy in real-time object recognition.}

\vica{\textbf{Vehicle Matching}.
To establish correspondences between detected vehicles across paired frames, we compute a similarity matrix using Intersection-over-Union (IoU) metrics.
The function \(F_{\text{IoU}}\) generates \(\mathbf{X}_{\text{IoU}} \in \mathbb{R}^{|\mathbf{B}_1| \times |\mathbf{B}_2|}\) where each element \(\mathbf{X}_{\text{IoU}}^{i,j} \) quantifies spatial overlap between detected boxes.
To exclude stationary objects, we apply a threshold mask:
\[
\mathbf{X}_{\text{IoU}} = \mathbf{X}_{\text{IoU}}\odot (\mathbf{X}_{\text{IoU}} < \text{IoU}_{\max}),
\]
where \(\text{IoU}_{\max}=0.98\) excludes high-overlap matches that likely represent stationary objects.
The optimal bipartite matching is then obtained using the Hungarian algorithm \(\mathbf{H}(\cdot)\), which produces matched index pairs \(L_{\text{match}} = \{(i_1,j_1),(i_2,j_2),\ldots\}\).}

\vica{\textbf{Orientation Calculation}.
For each valid match $(i,j)$ in \(L_{\text{match}}\), we compute the geometric orientation by analyzing the displacement vector between frame centroids:
\[
\begin{aligned}
O_{\text{det}} &= \text{arctan}(\mathbf{Cen}(\mathbf{B}_2^j) - \mathbf{Cen}(\mathbf{B}_1^i)),
\end{aligned}
\]
where \(\mathbf{Cen}(\cdot)\) denotes the centroid of a bounding box.}

\begin{algorithm}
	\caption{Two-Frame Wrong-Way Cycling Detector}\label{algorithm}
	
	\begin{algorithmic}[1]
		\REQUIRE whole video $\mathcal{V}$, right-way orientation $O_{right}$  
        \STATE \re{}{Sparse sampling: $\mathcal{S} = \{ \mathcal{S}_0, \mathcal{S}_1, \ldots \}$ from $\mathcal{V}$, where each $\mathcal{S}_k = (f_k^1, f_k^2)$ is a pair of consecutive frames}
		\FOR{$(f_k^1, f_k^2)$ in $\mathcal{S}$}
		\STATE \re{}{Apply detection model:} $\mathbf{B}_1 \leftarrow D(f_k^1)$, $\mathbf{B}_2 \leftarrow D(f_k^2)$
		\STATE \re{}{Compute IoU:} $\mathbf{X}_{\textit{iou}} \leftarrow F_{\textit{iou}}(\mathbf{B}_1, \mathbf{B}_2)$
		\STATE $\mathbf{X}_{\textit{iou}} \leftarrow (\mathbf{X}_{\textit{iou}} < \textit{IoU}_{\textit{max}}) \cdot \mathbf{X}_{\textit{iou}}$
		\STATE \re{}{Apply Hungarian algorithm: }$L_{\textit{match}} \leftarrow \mathbf{H}(\mathbf{X}_{\textit{iou}})$
		\FOR{$i,j$ in $L_{\textit{match}}$}
            \STATE \re{}{Compute orientation in two methods:}
		\STATE $O_{\textit{det}} \leftarrow \mathbf{Orient}(\mathbf{Cen}(\mathbf{B}_1^i), \mathbf{Cen}(\mathbf{B}_2^j))$
		\STATE $O_{\textit{model}} \leftarrow \mathbf{Ave}(\mathbf{F}_o({\mathbf{B}_1^i, f_k^1}), \mathbf{F}_o({\mathbf{B}_2^j, f_k^2}))$
            \STATE \re{}{Pick out valid instances:}
		\IF{$\mathbf{AndStrategy}(O_{\textit{det}}, O_{\textit{model}})$}
		\STATE \re{Ans}{}$\re{}{OrientationList}.\mathbf{append}(\mathbf{Ave}(O_{\textit{det}}, O_{\textit{model}}))$
		\ENDIF
		\ENDFOR
		\ENDFOR
        \STATE \re{}{Count for final number:}
        \STATE $is\_Right \leftarrow \textbf{Dis}($\re{Ans}{}$\re{}{OrientationList}, O_{right})<$\re{120}{$\frac{2}{3}\pi$}
        \STATE $\mathcal{D}_R \leftarrow \mathbf{SUM}(is\_Right)$
        \STATE $\mathcal{D}_W \leftarrow \mathbf{SUM}(\neg is\_Right)$
  \RETURN $\mathcal{D}^R, \mathcal{D}^W$
	\end{algorithmic}
\end{algorithm}

\subsection{Appearance-Based Orientation Estimation}
\label{orientation}
\vica{While motion-based estimation provides reliable orientation cues from inter-frame displacement, it may fail in scenarios with insufficient movement, occlusion, or ambiguous trajectories.
To address these limitations, we introduce an appearance-based orientation estimation model that predicts vehicle direction directly from visual appearance.}

Figure \ref{Fig: ensemble} illustrates the architecture of our appearance-based orientation estimation model, which takes an image of vehicle as input, and output its facing orientation.
Since the model output a continuous and periodic variable, we applied Phase-Shifting Coder (PSC) \cite{Yu_2023_CVPR} to transform the discontinuous degree system into continuous $m$-dimension vector.
The PSC works as follows:


Encoding:
\begin{equation}
    x_i = \cos \left( \varphi + \frac{2i\pi}{m} \right), i = 1, 2, \ldots, m.
\end{equation}

Decoding:
\begin{equation}
    \varphi = -\arctan\frac{\sum^{m}_{i=1}x_i\sin(\frac{2i\pi}{m})}{\sum^{m}_{i=1}x_i\cos(\frac{2i\pi}{m})}.
\end{equation}
In our work, we set $m$ as $3$, which strikes a good balance between representation accuracy and computational efficiency.

More specifically, we apply pretrained backbone, ResNet-101 \cite{resnet} here, to generate embedding $\mathbf{b} \in \mathbb{R}^n$ for an image, and a linear layer is applied to convert the embedding $\mathbf{b}$ to vector $\hat{\mathbf{x}} \in \mathbb{R}^m$.

During training process, the label $ \varphi \in \left( -\pi, \pi \right] $ is encoded to $ \mathbf{x} \in \mathbb{R}^m $. Then the loss is computed as follows:

\begin{equation}
    l = \frac{1}{m}\|\hat{\mathbf{x}} - \mathbf{x}\|^2.
\end{equation}

During inference, the vector $\hat{\mathbf{x}} \in \mathbb{R}^m$ is decoded to $\hat{\varphi} \in \left( -\pi, \pi \right]$ as the final output.

In the orientation prediction task, we define the metric as the distance between the prediction and the label. Since it is a cyclic number, its formula can be expressed as:

\begin{equation}
\begin{aligned}
d = \max(\varphi , \hat\varphi) - \min(\varphi , \hat\varphi).
\\
\text{Error} = \left\{
\begin{array}{l}
d, \quad \text{if } d \leq \textcolor{blue}{\pi},\\
\textcolor{blue}{2\pi} - d, \quad \text{otherwise}.
\end{array}
\right. \\
\end{aligned}
\end{equation}

Here, $\varphi$ represents the label value and $\hat\varphi$ represents the predicted value. Both $\varphi$ and $\hat\varphi$ are in the range of $(0, $\re{360}{ $2\pi]$} and are measured in radians. The error is computed based on the difference between the maximum and minimum values of $\varphi$ and $\hat\varphi$. If this difference is less than or equal to \re{180}{ $\pi$}, the error is equal to $d$. Otherwise, if the difference is greater than \re{180}{ $\pi$}, the error is computed as \re{$360-d$}{ $2\pi - d$}. In the experimental part, we utilize this metric to evaluate the performance of our model. 
\begin{figure}[hbt]
        \centering  
        \includegraphics[width=\columnwidth]{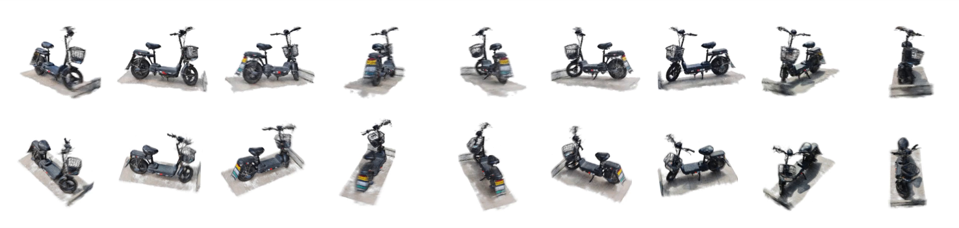} 
        \caption{One case reconstructed and generated by instant-ngp. }
        \label{nerf}
\end{figure}

To train our \re{}{appearance-based orientation estimation model}, \re{we utilize an effective pretrain-finetune architecture.}{ we initially plan to fine-tune a pretrained ViT \cite{dosovitskiy2021an} on limited manually collected and annotated orientation data. However, manually collected data often exhibits similar orientation patterns, resulting in imbalanced data distribution.} Recognizing the challenges associated with obtaining real-world data with precise orientation labels and addressing long-tail attributes, we generate synthetic data \re{. This synthetic data serves as a means to pretrain}{to conduct task-specific pretraining on} the model and to provide it with a beneficial bias towards orientation information. By leveraging this approach, we aim to enhance the model’s ability to understand and utilize orientation cues in real-world scenarios.

Firstly, we capture 360-degree videos using a regular camera in real-world settings. We then capture 30-40 frames from the video and utilize a trained COLMAP\cite{schoenberger2016mvs, schoenberger2016sfm} (Structure-from-Motion and Multi-View Stereo) system to estimate the camera's position and attitude parameters both inside and outside the captured scenes. Using these information, we reconstruct a 3D model using instant-ngp\cite{instant-ngp}.

Next, we leverage pre-designed camera poses to render images with corresponding orientation labels. To simulate real-world conditions, we render images from different heights for each orientation as shown in Figure \ref{nerf}. We capture images at every 10-degree interval with 3 different heights, resulting in 72 labeled images for each 3D model.

\re{}{This process allows us to generate a diverse dataset of labeled images, which serves as valuable task-specific pretraining data for our orientation-aware model.}


\re{}{\subsection{Ensemble Validation}}
\vica{Building on the orientation obtained from both motion-based and appearance-based methods, we employ an ensemble validation strategy to ensure robust and reliable orientation classification.}


\vica{First, to obtain robust orientation estimates from the two complementary methods, we implement an And-strategy that integrates outputs from both the motion-based estimation ($O_{\text{det}}$) and the appearance-based model ($O_{\text{model}}$).
Specifically, predictions are considered valid only when the absolute difference between the two orientation estimates falls below a predefined threshold $|O_{\text{det}}-O_{\text{model}}|<\text{Div}_{\max}$.
We set $\text{Div}_{max}=\frac{2}{3}\pi$ in our implementation.}

\vica{For instances that pass the cross validation step, we compute the final orientation by averaging the two validated predictions.
This final orientation is then compared against the expected right-way direction $O_{\text{right}}$.
If the angular distance is less than $\frac{2}{3}\pi$, the instance is classified as a right-way cycling instance; otherwise, it is considered wrong-way cycling.
This process ultimately yields the counts of right-way cycling ($\mathcal{D}_R$) and wrong-way cycling ($\mathcal{D}_W$) instances.}

Mathematically, it can be shown that employing an And-strategy with an ensemble of two models can enhance overall performance compared to relying on a single model. 
Specifically, when the accuracy of both models exceeds 50\%, \re{}{this strategy guarantees improved performance.}
\begin{lemma}
    Assume that $P_1$ and $P_2$ represent the respective posterior probabilities that each of the two models errs, which are in the range of $(0, 0.5]$, $P_w = P_1P_2$ represents the possibility of two models erring simultaneously, and $P_{valid} = (1-P_1)(1-P_2) + P_1P_2$ represents the possibility of the sample being valid. The possibility of the ensemble model erring $P_3 = \frac{P_w}{P_{valid}} \leq min\{P_1, P_2\}$.
\end{lemma}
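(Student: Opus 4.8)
The plan is to prove the two claims in the inequality $P_3 = \frac{P_w}{P_{\text{valid}}} \leq \min\{P_1, P_2\}$ separately, and the crucial observation is that the bound follows from a purely algebraic comparison once we write everything in terms of $P_1$ and $P_2$. First I would fix notation and, by symmetry, assume without loss of generality $P_1 \leq P_2$, so that $\min\{P_1, P_2\} = P_1$; it then suffices to show $P_3 \leq P_1$, i.e.\ $\frac{P_1 P_2}{(1-P_1)(1-P_2) + P_1 P_2} \leq P_1$. Since $P_1 > 0$, this is equivalent (dividing both sides by $P_1$ and clearing the positive denominator $P_{\text{valid}}$) to $P_2 \leq (1-P_1)(1-P_2) + P_1 P_2$, i.e.\ to $P_2(1 - P_1) \leq (1-P_1)(1-P_2)$, i.e.\ to $P_2 \leq 1 - P_2$, which holds precisely because $P_2 \le 0.5$. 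This chain of equivalences is the whole argument for the bound against $P_1$.

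Next I would handle the bound against $P_2$ in the same way — or, more efficiently, simply note that the derivation above never actually used $P_1 \le P_2$ except to identify the minimum, so the identical computation with the roles of $P_1$ and $P_2$ swapped yields $P_3 \le P_2$, and hence $P_3 \le \min\{P_1,P_2\}$. I would also record the elementary sanity facts that make the statement well-posed: $P_w = P_1 P_2 \in (0, 0.25]$ and $P_{\text{valid}} = (1-P_1)(1-P_2) + P_1P_2 \in (0,1]$ with $P_{\text{valid}} > 0$, so the ratio defining $P_3$ is legitimate and lies in $(0,1]$. If desired, I would append the interpretive remark that this shows the And-strategy never does worse than the better of the two models on the retained (valid) samples, which is the point the paper wants to make.

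The argument has essentially no obstacle: it is a two-line inequality manipulation whose only substantive input is the hypothesis $P_i \le 1/2$. The one place to be slightly careful is the direction of the inequality when clearing denominators — this is safe because $P_{\text{valid}} > 0$ and $P_1 > 0$, so no sign flips occur — and the one modeling point worth a sentence of comment (though not part of the formal proof) is the implicit conditional-independence assumption baked into writing $P_w = P_1 P_2$ and $P_{\text{valid}} = (1-P_1)(1-P_2) + P_1 P_2$; the lemma as stated simply takes these product forms as definitions, so the proof need only verify the algebra.
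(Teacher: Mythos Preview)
Your proof is correct, but it follows a different route from the paper. The paper argues via calculus: it computes the partial derivative
\[
\frac{\partial}{\partial P_1}\,\frac{P_1P_2}{(1-P_1)(1-P_2)+P_1P_2}=\frac{P_2(1-P_2)}{(1-P_1-P_2+2P_1P_2)^2}>0
\]
for $P_2\in(0,0.5]$, concludes that $P_3$ is monotone increasing in each variable, and then evaluates at the boundary to get $P_3\le f(P_1,0.5)=P_1$ and $P_3\le f(0.5,P_2)=P_2$. Your argument instead clears denominators and reduces $P_3\le P_1$ directly to the equivalent inequality $P_2\le 1-P_2$, which is just $P_2\le 0.5$. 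Your approach is strictly more elementary---no derivatives, no appeal to monotonicity---and it makes transparent exactly where the hypothesis $P_i\le 1/2$ enters. The paper's approach, on the other hand, yields the extra structural fact that $P_3$ is monotone in each error probability, which your algebraic chain does not surface explicitly. Both arguments are complete; yours is the cleaner one for the bare lemma as stated.
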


\begin{proof}
    Considering $P_3$ as a function related to $P_1, P_2$, we have 
    \begin{equation}
        P_3 = f(P_1, P_2) = \frac{P_1P_2}{(1-P_1)(1-P_2) + P_1P_2}.
    \end{equation}
    Deriving $f$ with respect to $P_1$ (or equivalently $P_2$) yields:
    \begin{equation}
        \begin{aligned}
        \frac{\partial f}{\partial P_1} &= \frac{-P_2^2 + P_2}{(1-P_1-P_2+2P_1P_2)^2}, \\
        \frac{\partial f}{\partial P_1} &> 0, \ \text{for}\ P_1, P_2 \in (0, 0.5].
    \end{aligned}
    \end{equation}
    
    This establishes $P_3$ as a monotonically increasing function relative to $P_1$ and $P_2$. As a result,
    \begin{equation}
        P_3 \leq f(P_1, 0.5) = P_1, P_3 \leq f(0.5, P_2) = P_2.
    \end{equation}
\end{proof}

\section{\vica{Temporal Wrong-Way Cycling Ratio Estimator}}

\vica{Using the Two-Frame Wrong-Way Cycling Detector, we can determine the orientations of non-motor vehicles at discrete time points.
However, reconstructing temporal information from such sparse data remains challenging, since each instance may appear at multiple time points.
Our Temporal WWC Estimator addresses this by estimating the number of instances ($N_R$ and $N_W$) passing through each time interval $T_{\text{gap}}$ using the counts of instances ($\mathcal{D}_R$ and $\mathcal{D}_W$).}

\vica{Formally, let $N_k$ be the number of vehicles passed the time interval \(T_{gap}\) between $t_{k-1}$ and $t_{k}$, we have the estimation $\hat{N} _k = \mathcal{D}_k - \varphi \mathcal{D}_{k-1}$, where the parameter $\varphi$ quantifies the probability of a vehicle from the previous frame persisting into the subsequent one.}
To effectively estimate the parameter \(\varphi\), we employ the Auto Regressive Moving Average (ARMA) model, which can be generally described as \re{following}{follows}:

\begin{equation}
    \mathcal{D}_k = c + \epsilon_k + \sum\limits_{i=1}^p \varphi_i  \mathcal{D}_{k-i} + \sum\limits_{j=1}^q \theta_j \epsilon_{k-j},
\end{equation}
\vica{where $\mathcal{D}_k$ represents vehicle counts at time interval $k$, $c$ is the constant term representing baseline traffic flow, $\epsilon_k \sim \mathcal{N}(0, \sigma^2)$ captures random fluctuations that are not explained by the past values, and $\varphi_i$ and $\theta_i$ are the auto-regressive and moving average coefficients, respectively.}

The sum \( \Sigma_{i=1}^p \varphi_i \mathcal{D}_{k-i} \) is the autoregression (AR) part, where \( p \) is the order of the AR process.
Each \( \varphi_i \) is a parameter that multiplies the number of vehicles at a previous time point \( k-i \), indicating how past values are weighted in the model. In our scenario, $p$ is selected as one, so that this term is simplifed as $\varphi \mathcal{D}_{k-i} $. 

The expression \( \Sigma_{i=1}^q \theta_j \epsilon_{k-j} \) represents the moving average (MA) part, wherein \( q \) denotes the order of the MA process.
Each coefficient \( \theta_j \) corresponds to a parameter that is applied to the lagged error term \( \epsilon_{k-j} \), illustrating the impact of historical forecast errors on the current value.
Within our traffic flow context, this encapsulates the momentum of vehicular movement, particularly in scenarios involving right-way cycling.
Conversely, for instances of wrong-way cycling, which are irregular and unforeseen, the MA component is disregarded, thereby setting \( q \) to zero.

\vica{After building our ARMA models, we can estimate the parameters $\phi$ and $\theta$ using maximum likelihood estimation~\cite{josef_perktold_2023_10378921, mcleod2008faster}.
We then derive sets of \(\hat{N}^R\) and \(\hat{N}^W\), which can be used to calculate the wrong-way cycling ratio:
\begin{equation}
    \text{WWC Ratio} = \frac{{E}(N_W)}{E(N_R)+E(N_W)}=\dfrac{\sum_{i}\hat{N}_{i,W}}{\sum_{i}\hat{N}_{i,R}+\sum_{i}\hat{N}_{i,W}}.
    \label{ratio}
\end{equation}}

\begin{figure*}[hbt]
        \centering  
        \includegraphics[width=\textwidth]{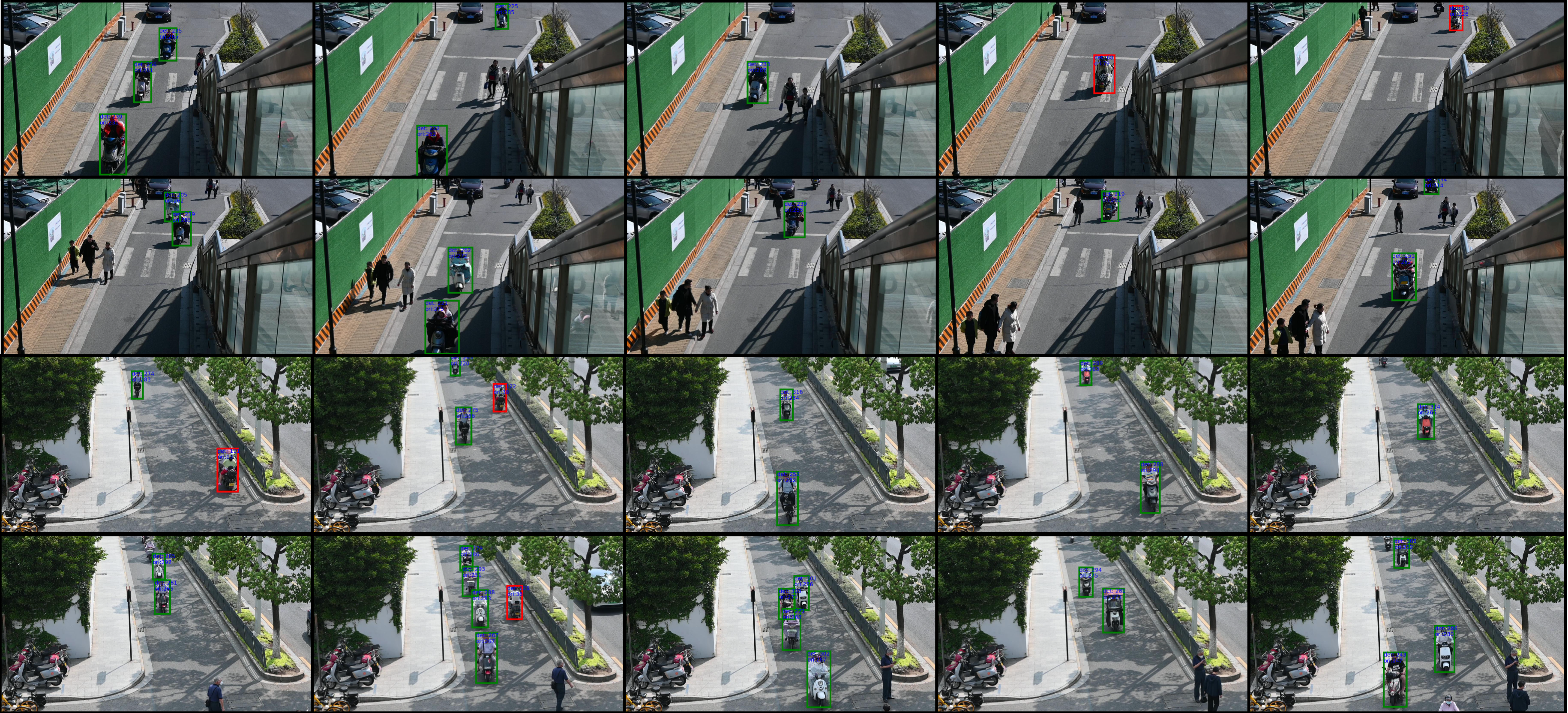} 
        \caption{Visualization from the validation videos for wrong-way cycling prediction in video, where green bounding box means right-way cycling, red means wrong-way cycling. `Det' and `Ori' \re{}{refer to the} output orientation from \re{}{the detection-based branch and the appearance-based branch.}}
        \label{val_data}
\end{figure*}

\section{Experiments}

\subsection{Benchmark Datasets}
\re{To support training and evaluation of our method, and to foster future research on this task, we propose three distinct datasets: }{Given that existing datasets related to non-motor vehicles are either not publicly available or exhibit varying quality, we recognized the need to create a new dataset from scratch. This effort aims to enhance the reliability of our research and support the training and evaluation of our method. To facilitate further research in this area, we propose three distinct datasets:} the non-motor vehicle detection dataset, the orientation prediction dataset, and the WWC ratio estimation dataset.
Each dataset serves a specific role in our model development and evaluation process.

\subsubsection{\re{}{Non-motor Vehicle Detection dataset}}
\re{}{
This dataset is designed for training the detection model.
It contains 223 images with 474 non-motor vehicle bounding boxes captured under diverse conditions.
The training-validation split is 8:2, which our experiments confirm as sufficient for effective model training.}

\subsubsection{\re{}{Orientation Prediction Dataset}}
This dataset is designed for training the appearance-based orientation prediction model.
It comprises three subsets: the pretraining set, the finetuning set, and the validation set.
The pretraining set consists of synthetic images generated using instant-ngp.
It includes 12 distinct 3D models, and for each model, we generate 72 labeled images captured from two different heights.
The details have been introduced in Sec. \ref{orientation}.
The finetuning set and the validation set contain 1,060 and 169 real-world images with manually-labeled orientation, respectively.

\begin{figure}[hbt]
        \centering  
        \includegraphics[width=0.9\columnwidth]{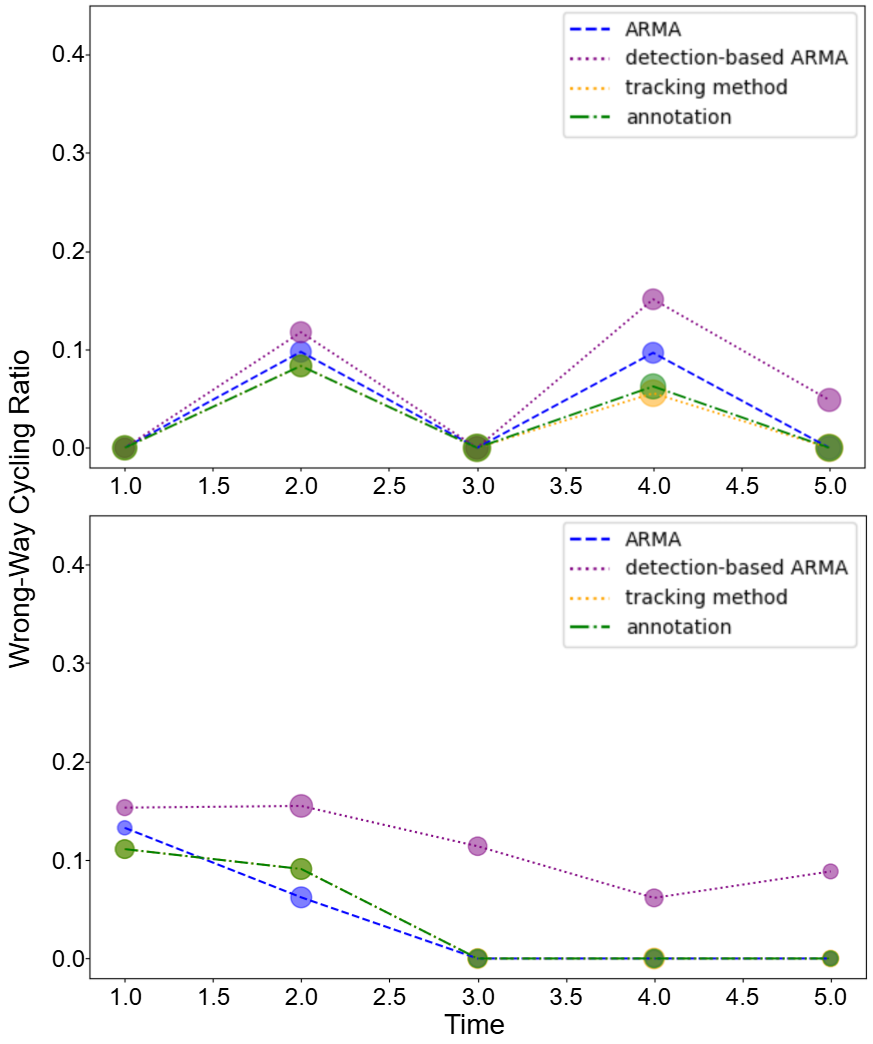} 
        \caption{Minute-level comparison between WWC-Predictor, detection-only WWC-Predictor, and tracking-based method (SORT). The size of the circle showcases its scale of numbers. }
        \label{exp1}
\end{figure}

\subsubsection{\re{}{WWC Ratio Estimation Dataset}}

\re{}{This dataset comprises four videos captured by road cameras situated in various locations: three short videos, each lasting 5 minutes (denoted as Case 1, Case 2, and Case 3), and one long video lasting 20 minutes (denoted as Case 4). As illustrated in Figure \ref{val_data}, which displays screenshots and detection results, this diverse collection of footage lays a solid foundation for evaluating the performance of different methods in terms of both speed and accuracy. By including both short and long videos, we can assess the methods' ability to predict the wrong-way cycling ratio under various conditions, thereby validating their effectiveness in capturing both short-term behaviors and longer-duration patterns.}
Furthermore, we annotate instances of wrong-way and right-way cycling every minute, enabling us to assess performance on a minute-by-minute basis.
This evaluation strategy allows us to determine the effectiveness and reliability of various methods in accurately and efficiently predicting the wrong-way cycling ratio in road camera videos. 

\subsection{Baselines}

\re{}{
To comprehensively assess the performance of our proposed WWC-Predictor, we establish a baseline comparison that includes both external tracking methods and a simplified version of our own method.
This simplified method isolates the detection component, omitting the ensemble strategy from the full WWC-Predictor. In this detection-only baseline, we compute the orientation ($O_{det}$) between the center points of the two bounding boxes as the final output, providing a clear comparison to the more complex ensemble approach.}

\re{}{For the tracking component, since we lack annotated tracking data for training end-to-end trackers, we use tracking-by-detection methods. We employ SORT \cite{SORT}, DeepSORT \cite{SORT_2}, and ByteTrack \cite{zhang2022bytetrack}, with ByteTrack representing the state-of-the-art in detection-based tracking. }

\subsection{Implementation Details}
\re{}{We employ the YOLOv5-m architecture for the detection module, which strikes a balance between accuracy and computational efficiency, making it suitable for real-time applications. To predict orientation, we utilize a refined Resnet-101 network combined with a Phase-Shifting Coder (PSC) to capture fine-grained spatial features and temporal shifts in the cycling trajectory. Also, we use the same Resnet-101 as a feature extractor in the DeepSORT baseline. We configure the system to sample video frames every 2 (or 4) seconds, ensuring a consistent temporal resolution for accurate tracking.} \re{Table \ref{main_exp} presents a comparison of our method against the baselines on our benchmark dataset.}{}GPU processing time was measured on a single RTX 3080Ti, with serial inference for the detection model and parallel inference for the appearance-based orientation estimation model (in cases processing a single image).

\subsection{Results}

\begin{table*}[ht]
\centering
\caption{Comparison of our method against others on our benchmark. ``Error" refers to the absolute deviation between the predicted ratio and the ground truth, while ``Overall Error" represents the mean of these four discrepancies.  \re{}{\(T_{gap}\)} denotes the temporal interval between successive samples in this methodology. ``Time" refers to GPU time consumed for processing video of one minute. \re{}{Case 1-3 refer to three 5-minute videos, while case 4 refers to a 20-minute-long video.}}
\begin{tabular}{cccccccccccc}
\toprule
& \multirow{2}{*}{ \re{}{\(T_{gap}\)}} & \multicolumn{2}{c}{\re{Short 1}{ Case 1}} & \multicolumn{2}{c}{\re{Short 2}{ Case 2}} & \multicolumn{2}{c}{\re{Short 3}{ Case 3}} & \multicolumn{2}{c}{\re{Long}{ Case 4}} & Overall  & \multirow{2}{*}{Time}  \\
& & Ratio                         & Error                        & Ratio                        & Error                         & Ratio                         & Error                        & Ratio                       & Error                       & Error                     &                                             \\ \midrule

SORT \re{}{\cite{SORT}}                     & 0.17s                   & 4.2\%                         & -0.3\%                       & 2.4\%                        & -0.1\%                        & 4.1\%                         & -0.2\%                       & 12.2\%                      & +1.0\%                      & 0.375\%                                  & 20.18s                                      \\
DeepSORT \re{}{\cite{SORT_2}}              & 0.17s                   & 3.8\%                         & -0.7\%                       & 2.4\%                        & -0.1\%                        & 4.1\%                         & -0.2\%                       & 11.5\%                      & +0.3\%                      & 0.325\%                                  & 29.37s                                   \\
ByteTrack \re{}{\cite{zhang2022bytetrack}} & 0.17s   & 4.3\%                         & -0.2\%                       & 2.5\%                        & +0.0\%                        & 4.1\%                         & -0.2\%                       & 12.1\%                      & +0.9\%                      & 0.325\%                                  & 24.11s                                     \\
ByteTrack \re{}{\cite{zhang2022bytetrack}} & 0.5s     & 3.0\%                         & -1.5\%                       & 7.7\%                        & +5.2\%                        & 1.6\%                         & -2.7\%                       & 14.3\%                      & +3.1\%                      & 3.12\%                                   & 15.85s              \\
\midrule
\re{}{Detection only WWC Predictor (Ours)}          & 2s                             & 11.9\%                        & +7.4\%                       & 6.2\%                        & +3.7\%                        & 11.8\%                        & +7.5\%                       & 17.3\%                      & +6.1\%                      & 6.20\%                                     & 3.83s                

\\

\re{}{WWC Predictor (Ours)}                 & 2s                    & 3.9\%                         & -0.6\%                       & 3.6\%                        & +1.1\%                        & 4.9\%                         & +0.6\%                       & 14.8\%                      & +3.6\%                      & 1.475\%                                  & 4.50s                 
\\
\re{}{WWC Predictor (Ours)}   & 4s 
& 2.7\%                         & -1.8\%                       & 2.1\%                        & -0.4\%                        & 3.8\%                         & -0.5\%                       & 16.6\%                      & +5.4\%                      & 2.025\%                                      & 2.95s               
\\
\bottomrule
\vspace{2mm}

\end{tabular}

\label{main_exp}
\end{table*}

\begin{figure}[hbt]
        \centering  
        \includegraphics[width=0.9\columnwidth]{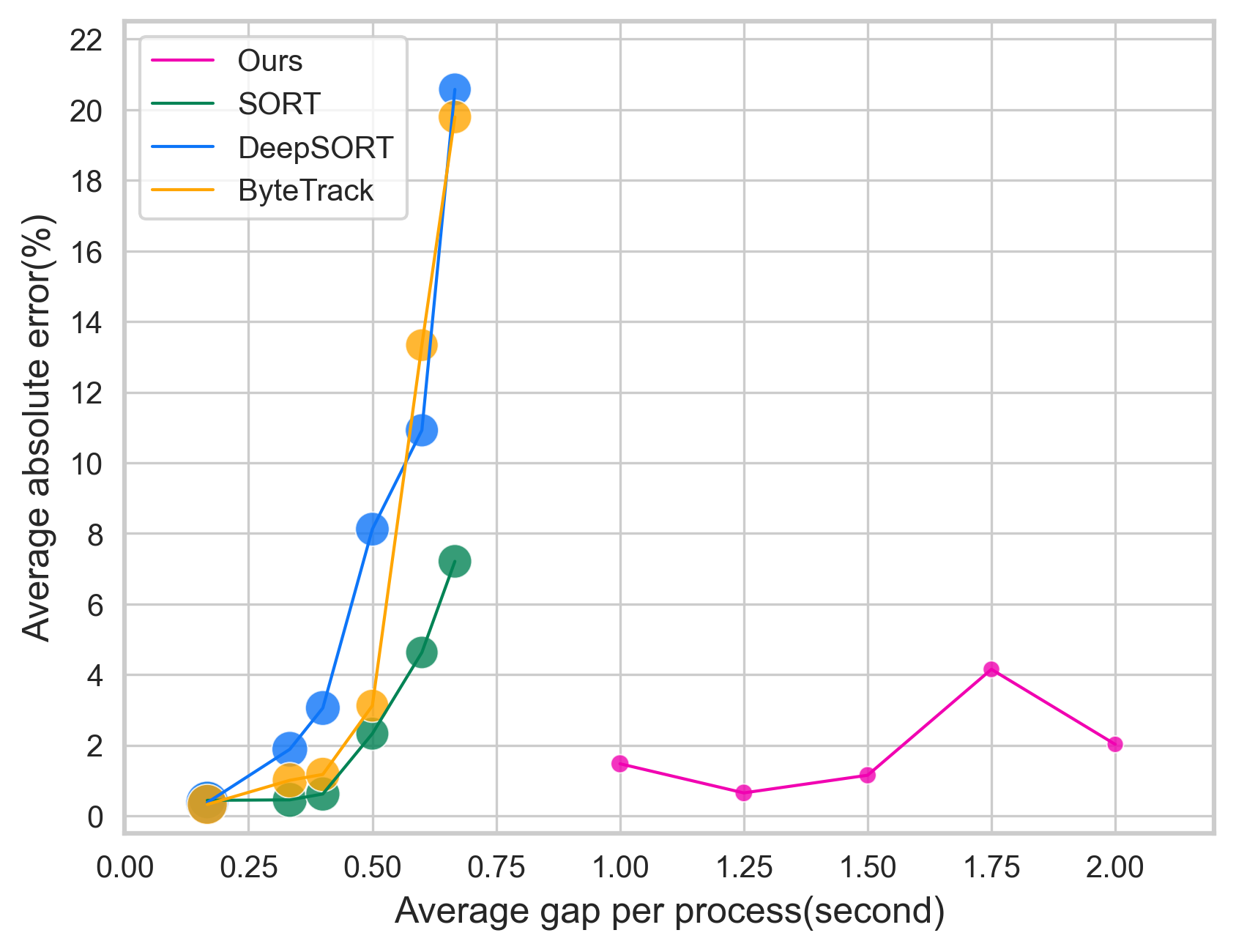} 
        \caption{Processed information scale, algorithm speed and performance comparison between WWC-Predictor, SORT, DeepSORT and ByteTrack. The size of circle showcases algorithm speed. }
        \label{fig:time}
\end{figure}

\re{Overall}{As shown in Table \ref{main_exp}}, the comprehensive WWC-Predictor method exhibits a competitive absolute error rate of 1.475\% and a swift inference time of 4.50 seconds per video minute. The detection-only variant of WWC-Predictor demonstrates a higher error margin due to the inherent uncertainty of the detection model, which leads to a significant proportion of True-Negative errors. Conversely, traditional tracking methods showcase a lower error rate owing to their robust frame-to-frame relations and the simplicity of time-dimensional prediction. 
\re{}{However, these methods require substantially higher computational resources.}

Figure \ref{fig:time} presents a comparative analysis of the WWC Predictor against SORT and DeepSORT across varying time interval scales. It is evident that both SORT and DeepSORT require relatively short time intervals to maintain effective tracking, whereas our WWC Predictor demonstrates robust performance over a specific range of larger intervals. 

We annotate instances of wrong-way and right-way cycling every minute, enabling us to assess performance on a minute-by-minute basis. This evaluation strategy allows us to determine the effectiveness and reliability of various methods in accurately and efficiently predicting the wrong-way cycling ratio in road camera videos. Figure \ref{exp1} presents a minute-level comparison between these three methods\re{}{, which shows that our method provides a more robust estimation than detection-only model and boosts the performance closer to traditional tracking methods.}

\subsection{Ablation Study}
\label{sec:ablation}

\begin{figure}[hbt]
        \centering  
        \includegraphics[width=0.9\columnwidth]{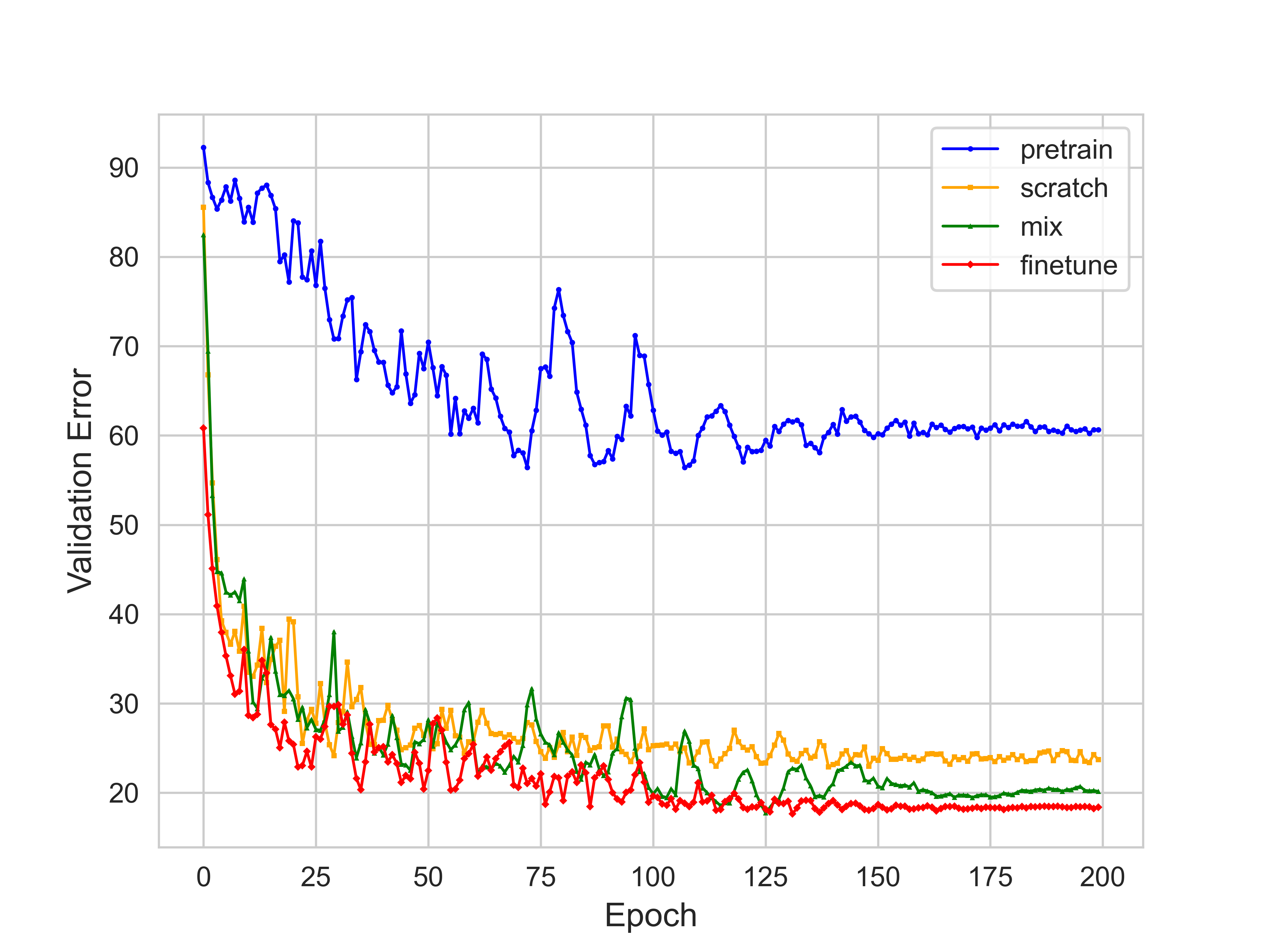} 
        \caption{Comparison of different training strategies for \re{}{appearance-based orientation estimation} model.}
        \label{fig:ablation}
\end{figure}

\subsubsection{Ablation on \re{}{appearance-based orientation estimation} model}
As described in Section \ref{orientation}, we have devised a pretrain-finetune architecture for training an \re{}{appearance-based orientation estimation} model. To evaluate the impact of the pretraining dataset and this specific training architecture, we conducted an ablation study. This study allowed us to assess the effectiveness and significance of these components in our model's performance.

Comparison of the validation error changes resulting from different training methods is presented in Figure \ref{fig:ablation}. The training methods are categorized as follows: 

\begin{itemize}
    \item ``Pretrain" denotes training exclusively with synthetic data.
    \item ``Scratch" refers to training solely with real-world data.
    \item ``Mix" represents training by combining both real-world and synthetic data.
    \item ``Finetune" indicates the process of fine-tuning on real-world data following a pretraining phase with synthetic data.
\end{itemize}

In our pretrain-finetune architecture, we successfully attain a minimal validation error of 17.63. Moreover, the performance post-convergence significantly surpasses that of training from scratch or employing a mixed dataset. While this error rate might initially appear substantial, it is deemed acceptable for its intended application as a verification model.

\subsection{Experiments of ARMA part}

\begin{figure*}
    \centering
    \includegraphics[width=\textwidth]{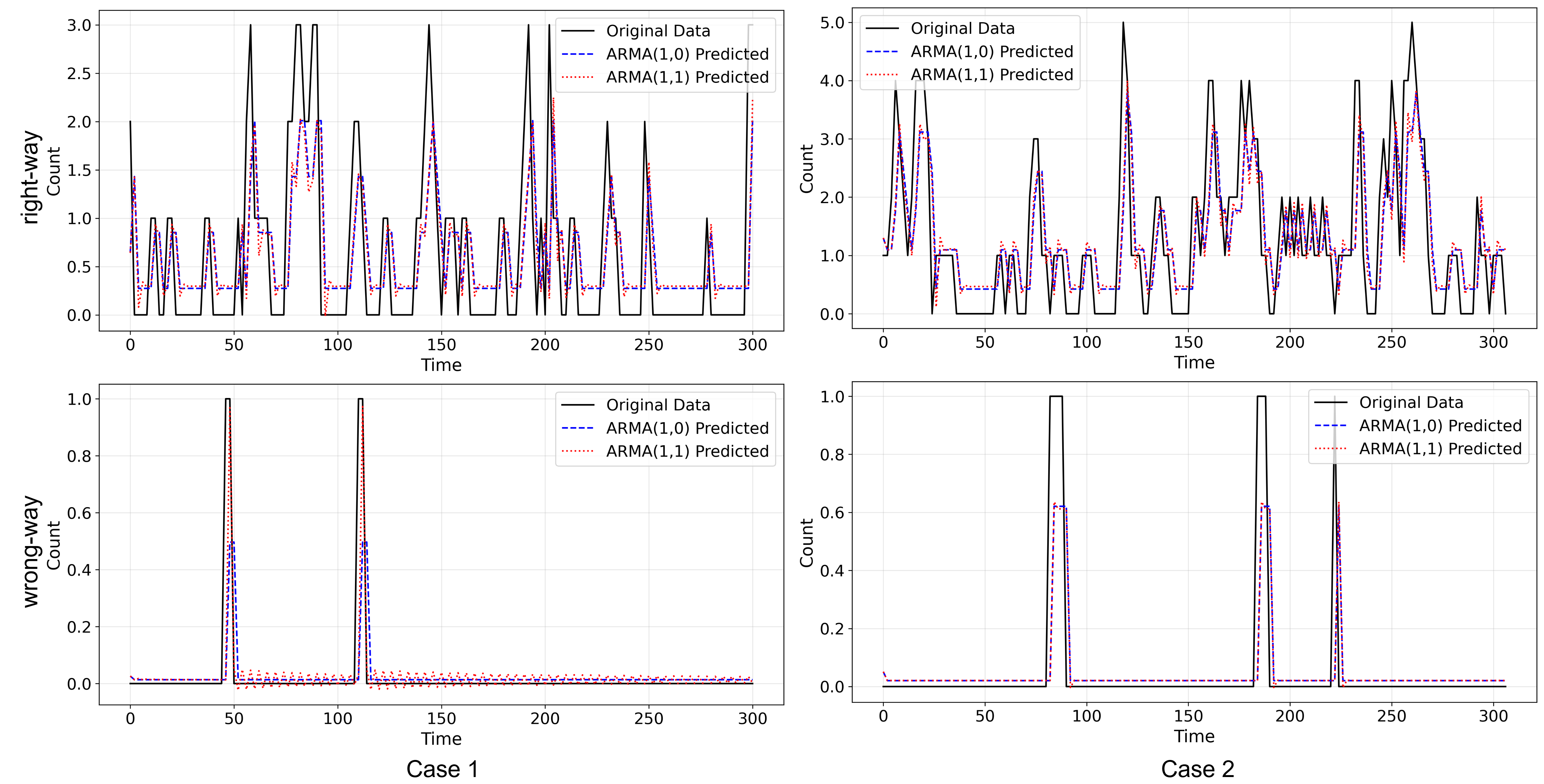}
    \caption{\re{In (a), we use ARMA(1,1) model to fit right-way cycling cases, while ARMA(1,0) for wrong-way cycling cases. In (b), use ARMA(1,1) model to fit both wrong-way cycling cases and right-way cycling cases.}{ Comparison between ARMA(1,1) and ARMA(1,0) for wrong-way and right-way data in case 1 and case 2.}  }
    \label{fig:ARMA}
\end{figure*}

\begin{table*}[htb]
    \centering
    \caption{Statistic results of auto regression(AR) part and moving average(MA) part for right-way cycling and wrong-way cycling. \re{}{Case 1-3 refer to three 5-minute videos, while case 4 refers to a 20-minute-long video.} }
\begin{tabular}{lllllllllllll}
\toprule
        & \multicolumn{6}{c}{ARMA model for Right-way cycling}                                                                                                                                        & \multicolumn{6}{c}{ARMA model for Wrong-way cycling}                                                                                                                                        \\
        & \multicolumn{3}{c}{AR}                                                                       & \multicolumn{3}{c}{MA}                                                                       & \multicolumn{3}{c}{AR}                                                                       & \multicolumn{3}{c}{MA}                                                                       \\
        & \multicolumn{1}{c}{std err} & \multicolumn{1}{c}{z} & \multicolumn{1}{c}{P\textgreater{}$\vert z\vert$} & \multicolumn{1}{c}{std err} & \multicolumn{1}{c}{z} & \multicolumn{1}{c}{P\textgreater{}$\vert z\vert$} & \multicolumn{1}{c}{std err} & \multicolumn{1}{c}{z} & \multicolumn{1}{c}{P\textgreater{}$\vert z\vert$} & \multicolumn{1}{c}{std err} & \multicolumn{1}{c}{z} & \multicolumn{1}{c}{P\textgreater{}$\vert z\vert$} \\ \midrule

\re{}{Case 1} & 0.093                       & 6.143                 & 0.000                                  & 0.107                       & 1.832                 & 0.067                                  & 0.087                       & 6.583                 & 0.000                                  & 0.080                       & 0.504                 & 0.614                                  \\
\re{}{Case 2} & 0.130                       & 3.428                 & 0.001                                  & 1.320                       & 1.482                 & 0.138                                  & 0.316                       & -0.015                & 0.988                                  & 0.282                       & 3.512                 & 0.000                                  \\
\re{}{Case 3} & 0.095                       & 5.857                 & 0.000                                  & 0.115                       & 1.603                 & 0.109                                  & 0.491                       & -0.146                & 0.884                                  & 0.470                       & 0.751                 & 0.453                                  \\
\re{}{Case 4}    & 0.048                       & 10.783                & 0.000                                  & 0.055                       & 3.800                 & 0.000                                  & 0.071                       & 6.094                 & 0.000                                  & 0.083                       & -1.021                & 0.307                                  \\
Ave     & 0.092                       & 6.553                 & 0.000                                  & 0.399                       & 2.179                 & 0.079                                  & 0.241                       & 3.129                 & 0.468                                  & 0.229                       & 0.937                 & 0.344                                  \\ \bottomrule
\\
\end{tabular}

\label{table:arma}
\end{table*}
In this section, we conduct a series of experiments to evaluate the performance of the \re{Auto Regressive Moving Average (ARMA)}{ARMA} model in the context of predicting wrong-way cycling occurrences. Figure \ref{fig:ARMA} illustrates a comparative analysis between the ARMA(1,0) and ARMA(1,1) configurations. The notation ARMA($p$, $q$) denotes an ARMA model \re{}{where $p$ and $q$ are the orders of the autoregressive (AR) process and the moving average (MA) process, respectively.}

The analysis, particularly evident in \re{the first and third panels of subfigure (b)}{Case 2}, suggests that the inclusion of the MA component, which originally intended to capture traffic flow dynamics, does not contribute positively to the predictive accuracy for wrong-way cycling events. This is attributed to the observation that wrong-way cycling incidents rarely exhibit the traffic flow characteristics modeled by the MA component.

 Table \ref{table:arma} presents the respective standard errors, z-values, and p-values for auto regressive (AR) and moving average (MA) terms from ARMA models applied to two different cycling behaviors: Right-Way Cycling and Wrong-way Cycling. In the ARMA model analysis for cycling behaviors, \re{}{the wrong-way cycling data within ``Case 2"} exhibits an unusual case where the AR term appears completely non-significant with a p-value of 0.988, indicating a misinterpretation of the auto regressive effect as if it were a moving average component. Generally, for Wrong-way Cycling, the AR terms consistently show non-significance across all categories, suggesting a lack of moving average characteristic in this behavior. This pattern contrasts with the Right-way cycling data, where AR terms are significant, indicating a reliable auto regressive process.

To address this discrepancy, we incorporate domain knowledge into the model specification process. Consequently, we refine the ARMA model for wrong-way cycling prediction to an ARMA(1,0) configuration, effectively omitting the MA component. This adjustment is predicated on the rationale that the auto regressive component alone is more representative of the underlying process governing wrong-way cycling incidents, thereby enhancing the model's predictive relevance in this particular application.

\re{}{\section{Discussion}}
\re{}{\textbf{Limitations.} WWC-Predictor intentionally forgoes instance-level detection capabilities to optimize computational efficiency for its primary purpose of video-level analysis. While this trade-off enables effective system-wide monitoring of phenomena like wrong-way cycling ratios, it inherently sacrifices granularity for broader operational objectives. Consequently, the approach lacks resolution for fine-grained examination of individual objects or localized interactions, focusing instead on aggregate behavioral patterns across the surveillance network. }

\re{}{\textbf{Future work.} Future research will focus on enhancing cross-camera relationship modeling through graph-structured approaches to better capture traffic network topology. Additionally, significant efforts will address practical deployment challenges by optimizing the framework for edge devices. This includes developing lightweight architectures and efficient computation strategies suitable for real-world implementation in resource-constrained camera networks.}

\vspace{3mm}

\section{Conclusion}

In this paper, we introduced the new problem of wrong-way cycling ratio prediction in CCTV videos, and proposed a novel method, WWC-Predictor, to tackle this problem by sparse sampling with efficient Two-Frame WWC-Detector and Temporal WWC-Predictor, which has been mathematically and experimentally proven to be effective compared with straightforward tracking methods. Additionally, to facilitate the training and validation of our method for this task, we have presented and open-sourced three datasets to build a convincing benchmark of this task. \re{}{In our evaluation, our WWC Predictor demonstrates satisfactory performance with low computational resource demand.}

\bibliography{reference}

@INPROCEEDINGS{Yu_2023_CVPR,
  author={Yu, Yi and Da, Feipeng},
  booktitle={2023 IEEE/CVF Conference on Computer Vision and Pattern Recognition (CVPR)}, 
  title={Phase-Shifting Coder: Predicting Accurate Orientation in Oriented Object Detection}, 
  year={2023},
  volume={},
  number={},
  pages={13354-13363},
  keywords={Computer vision;Visualization;Codes;Object detection;Phase frequency detectors;Feature extraction;Pattern recognition;Recognition: Categorization;detection;retrieval},
  doi={10.1109/CVPR52729.2023.01283}}

@article{instant-ngp,  
title={{Instant Neural Graphics Primitives with a Multiresolution Hash Encoding}}, 
url={http://dx.doi.org/10.1145/3528223.3530127}, 
DOI={10.1145/3528223.3530127}, 
journal={ACM Transactions on Graphics}, 
author={Müller, Thomas and Evans, Alex and Schied, Christoph and Keller, Alexander}, 
year={2022}, 
month={Jul}, 
pages={1–15}, 
language={en-US} 
}

@INPROCEEDINGS{schoenberger2016sfm,
  author={Schönberger, Johannes L. and Frahm, Jan-Michael},
  booktitle={2016 IEEE Conference on Computer Vision and Pattern Recognition (CVPR)}, 
  title={Structure-from-Motion Revisited}, 
  year={2016},
  volume={},
  number={},
  pages={4104-4113},
  keywords={Image reconstruction;Robustness;Cameras;Internet;Image registration;Transmission line matrix methods;Pipelines},
  doi={10.1109/CVPR.2016.445}}

@InProceedings{schoenberger2016mvs,
author="Sch{\"o}nberger, Johannes L.
and Zheng, Enliang
and Frahm, Jan-Michael
and Pollefeys, Marc",
editor="Leibe, Bastian
and Matas, Jiri
and Sebe, Nicu
and Welling, Max",
title="Pixelwise View Selection for Unstructured Multi-View Stereo",
booktitle="Computer Vision -- ECCV 2016",
year="2016",
publisher="Springer International Publishing",
pages="501--518",
isbn="978-3-319-46487-9"
}

@inproceedings{SORT,   title={{Simple Online and Realtime Tracking}},  url={http://dx.doi.org/10.1109/icip.2016.7533003},  DOI={10.1109/icip.2016.7533003},  booktitle={2016 IEEE International Conference on Image Processing (ICIP)},  author={Bewley, Alex and Ge, Zongyuan and Ott, Lionel and Ramos, Fabio and Upcroft, Ben},  year={2016},  month={Sep},  language={en-US}  }

@inproceedings{SORT_2,   title={{Simple Online and Realtime Tracking with a Deep Association Metric}},  url={http://dx.doi.org/10.1109/icip.2017.8296962},  DOI={10.1109/icip.2017.8296962},  booktitle={2017 IEEE International Conference on Image Processing (ICIP)},  author={Wojke, Nicolai and Bewley, Alex and Paulus, Dietrich},  year={2017},  month={Sep},  language={en-US}  }

@inproceedings{Xie_Cheng_Wang_Yao_Han_2021,   title={{Oriented R-CNN for Object Detection}},  url={http://dx.doi.org/10.1109/iccv48922.2021.00350},  DOI={10.1109/iccv48922.2021.00350},  booktitle={2021 IEEE/CVF International Conference on Computer Vision (ICCV)},  author={Xie, Xingxing and Cheng, Gong and Wang, Jiabao and Yao, Xiwen and Han, Junwei},  year={2021},  month={Oct},  language={en-US}  }

@inproceedings{Han_Ding_Xue_Xia_2021,   title={{ReDet: A Rotation-equivariant Detector for Aerial Object Detection}},  url={http://dx.doi.org/10.1109/cvpr46437.2021.00281},  DOI={10.1109/cvpr46437.2021.00281},  booktitle={2021 IEEE/CVF Conference on Computer Vision and Pattern Recognition (CVPR)},  author={Han, Jiaming and Ding, Jian and Xue, Nan and Xia, Gui-Song},  year={2021},  month={Jun},  language={en-US}  }

@software{YOLOV5,
author = {Jocher, Glenn},
doi = {10.5281/zenodo.3908559},
license = {AGPL-3.0},
month = may,
title = {{YOLOv5 by Ultralytics}},
url = {https://github.com/ultralytics/yolov5},
version = {7.0},
year = {2020}
}

@inproceedings{resnet,  
 title={{Deep Residual Learning for Image Recognition}}, 
 url={http://dx.doi.org/10.1109/cvpr.2016.90}, 
 DOI={10.1109/cvpr.2016.90}, 
 booktitle={2016 IEEE Conference on Computer Vision and Pattern Recognition (CVPR)}, 
 author={He, Kaiming and Zhang, Xiangyu and Ren, Shaoqing and Sun, Jian}, 
 year={2016}, 
 month={Jun}, 
 language={en-US} 
 }

@article{suttiponpisarn2022autonomous,
  title={{An Autonomous Framework For Real-time Wrong-way Driving Vehicle Detection from Closed-circuit Televisions}},
  author={Suttiponpisarn, Pintusorn and Charnsripinyo, Chalermpol and Usanavasin, Sasiporn and Nakahara, Hiro},
  journal={Sustainability},
  volume={14},
  number={16},
  pages={10232},
  year={2022},
  publisher={MDPI},
  doi={10.3390/su141610232}
}

@INPROCEEDINGS{choudhari2023traffic,
  author={Choudhari, Rutvik and Goel, Shubham and Patel, Yash and Ghane, Sunil},
  booktitle={2023 World Conference on Communication \& Computing (WCONF)}, 
  title={Traffic Rule Violation Detection using Detectron2 and Yolov7}, 
  year={2023},
  volume={},
  number={},
  pages={1-7},
  keywords={Training;Head;Roads;Motorcycles;Safety;Trajectory;Reliability;Motorcycle Detection;Helmet Detection;License Plate Detection;Detectron2;YOLOv7;Faster R-CNN},
  doi={10.1109/WCONF58270.2023.10235130}}

@software{yukai_yang_2020_4294717,
 author       = {Yukai Yang},
 title={{{FastMOT: High-performance Multiple Object Tracking Based on Deep SORT and KLT}}},
 month        = nov,
 year         = 2020,
 publisher    = {Zenodo},
 version      = {v1.0.0},
 doi          = {10.5281/zenodo.4294717},
 url          = {https://doi.org/10.5281/zenodo.4294717}
}

@INPROCEEDINGS{shubho2021real,
  author={Shubho, Fahimul Hoque and Iftekhar, Fahim and Hossain, Ekhfa and Siddique, Shahnewaz},
  booktitle={2021 IEEE Region 10 Conference (TENCON)}, 
  title={{Real-time traffic monitoring and traffic offense detection using YOLOv4 and OpenCV DNN}}, 
  year={2021},
  volume={},
  number={},
  pages={46-51},
  keywords={Head;Tracking;Computational modeling;Neural networks;Motorcycles;Cameras;Real-time systems;Computer Vision;Object Detection;Vehicle Detection;Vehicle Tracking;Traffic Offense Detection;YOLOv4;YOLOv4-tiny;OpenCV;Deep Neural Network},
  doi={10.1109/TENCON54134.2021.9707406}}

@INPROCEEDINGS{10421458,
  author={Vardhan, M. Harsha and Krishna, K. Venkata Sai and Munappa, Sunitha and Manoj, K. Aditya},
  booktitle={2023 International Conference on Next Generation Electronics (NEleX)}, 
  title={{Wrong Route Vehicles Detection Using Deep Learning}}, 
  year={2023},
  volume={},
  number={},
  pages={1-6},
  keywords={YOLO;Surveillance;Vehicle safety;Transportation;Traffic control;Automobiles;Task analysis;Traffic management;wrong route vehicles;CCTV surveillance},
  doi={10.1109/NEleX59773.2023.10421458}}

@INPROCEEDINGS{9230463,
  author={Rahman, Zillur and Ami, Amit Mazumder and Ullah, Muhammad Ahsan},
  booktitle={2020 IEEE Region 10 Symposium (TENSYMP)}, 
  title={{A Real-time Wrong-way Vehicle Detection Based on YOLO and Centroid Tracking}}, 
  year={2020},
  volume={},
  number={},
  pages={916-920},
  keywords={Cameras;Roads;Vehicle detection;Real-time systems;Graphics processing units;Convolution;Surveillance;vehicle;YOLO;centroid;wrong-way;computer vision},
  doi={10.1109/TENSYMP50017.2020.9230463}}

@article{tian2021emd,
  title={{An EMD and ARMA-based Network Traffic Prediction Approach in SDN-based Internet of Vehicles}},
  author={Tian, Miao and Sun, Chen and Wu, Shaozhi},
  journal={Wireless Networks},
  pages={1--13},
  year={2021},
  publisher={Springer},
  doi={10.1007/s11276-021-02675-2}
}

@inproceedings{peng2021short,
  title={{Short-term Traffic Flow Forecast Based on ARIMA-SVM Combined Model}},
  author={Peng, Jiaxin and Xu, Yongneng and Wu, Menghui},
  booktitle={2021 International Conference on Green Intelligent Transportation System and Safety},
  pages={287--300},
  year={2021},
  organization={Springer},
  doi={10.1007/978-981-19-5615-7_20}
}

@article{WEN2023119960,
title = {{A comprehensive survey of oriented object detection in remote sensing images}},
journal = {Expert Systems with Applications},
volume = {224},
pages = {119960},
year = {2023},
issn = {0957-4174},
doi = {https://doi.org/10.1016/j.eswa.2023.119960},
url = {https://www.sciencedirect.com/science/article/pii/S0957417423004621},
author = {Long Wen and Yu Cheng and Yi Fang and Xinyu Li},
}

@inproceedings{gu2017bikemate,
author = {Gu, Weixi and Zhou, Zimu and Zhou, Yuxun and Zou, Han and Liu, Yunxin and Spanos, Costas J. and Zhang, Lin},
title = {{BikeMate: Bike Riding Behavior Monitoring with Smartphones}},
year = {2017},
isbn = {9781450353687},
publisher = {Association for Computing Machinery},
url = {https://doi.org/10.1145/3144457.3144462},
doi = {10.1145/3144457.3144462},
pages = {313–322},
numpages = {10},
keywords = {Activity Recognition, Bike, Smartphones},
location = {Melbourne, VIC, Australia},
series = {MobiQuitous 2017}
}

@inproceedings{hayashi2021vision,
author = {Hayashi, Hirotaka and Xu, Anran and Zhou, Zhongyi and Yatani, Koji},
title = {{Vision-based Scene Analysis toward Dangerous Cycling Behavior Detection Using Smartphones}},
year = {2021},
isbn = {9781450384612},
publisher = {Association for Computing Machinery},
url = {https://doi.org/10.1145/3460418.3479300},
doi = {10.1145/3460418.3479300},
booktitle = {2021 Adjunct Proceedings of the 2021 ACM International Joint Conference on Pervasive and Ubiquitous Computing and Proceedings of the 2021 ACM International Symposium on Wearable Computers (UbiComp/ISWC)},
pages = {28–29},
numpages = {2},
keywords = {Cycling, computer vision, scene analysis., sensing with smartphone},
location = {Virtual, USA},
series = {UbiComp/ISWC '21 Adjunct}
}

@article{dhakal2018using,
title = {{Using CyclePhilly data to assess wrong-way riding of cyclists in Philadelphia}},
journal = {Journal of Safety Research},
volume = {67},
pages = {145-153},
year = {2018},
issn = {0022-4375},
doi = {https://doi.org/10.1016/j.jsr.2018.10.004},
url = {https://www.sciencedirect.com/science/article/pii/S0022437517307338},
author = {Nirbesh Dhakal and Christopher R. Cherry and Ziwen Ling and Mojdeh Azad},
keywords = {Cycling behavior, Naturalistic data, Smartphones, Wrong-way riding, Bicycle safety},
}

@INPROCEEDINGS{9648579,
  author={Suttiponpisarn, Pintusorn and Charnsripinyo, Chalermpol and Usanavasin, Sasiporn and Nakahara, Hiro},
  booktitle={2021 International Conference on Knowledge and Systems Engineering (KSE)}, 
  title={{Detection of Wrong Direction Vehicles on Two-way Traffic}}, 
  year={2021},
  volume={},
  number={},
  pages={1-6},
  keywords={Knowledge engineering;Road accidents;Embedded systems;Roads;Surveillance;Motorcycles;Traffic control;Vehicle Detection;YOLOv4-tiny;Object Tracking;Deep SORT;Direction Detection},
  doi={10.1109/KSE53942.2021.9648579}}

@article{bochkovskiy2020yolov4,
  title={{YOLOv4: Optimal Speed And Accuracy of Object Detection}},
  author={Bochkovskiy, Alexey and Wang, Chien-Yao and Liao, Hong-Yuan Mark},
  journal={arXiv preprint arXiv:2004.10934},
  year={2020}
}

@article{suttiponpisarn2022enhanced,
title = {{An Enhanced System for Wrong-Way Driving Vehicle Detection with Road Boundary Detection Algorithm}},
journal = {Procedia Computer Science},
volume = {204},
pages = {164-171},
year = {2022},
note = {2022 International Conference on Industry Sciences and Computer Science Innovation (iSCSi)},
issn = {1877-0509},
doi = {https://doi.org/10.1016/j.procs.2022.08.020},
url = {https://www.sciencedirect.com/science/article/pii/S187705092200758X},
author = {Pintusorn Suttiponpisarn and Chalermpol Charnsripinyo and Sasiporn Usanavasin and Hiro Nakahara},
keywords = {Object detection, Vehicle tracking, FastMOT, Image processing, Hough transform},
}

@inproceedings{manasa2023enhanced,
  title={{An Enhanced Real-time System for Wrong-Way and Over Speed Violation Detection Using Deep Learning}},
  author={Manasa, A and Renuka Devi, SM},
booktitle="2023 International Conference on Image Processing and Capsule Networks (ICIPCN)",
year="2023",
publisher="Springer Nature Singapore",
pages="309--322",
isbn="978-981-99-7093-3",
doi="10.1007/978-981-99-7093-3_21"
}

@article{saho2017kalman,
author = {Kenshi Saho},
title = {{Kalman Filter for Moving Object Tracking: Performance Analysis and Filter Design}},
booktitle = {Kalman Filters},
publisher = {IntechOpen},
year = {2017},
editor = {Ginalber Luiz de Oliveira Serra},
chapter = {12},
doi = {10.5772/intechopen.71731},
url = {https://doi.org/10.5772/intechopen.71731}
}

@inproceedings{zhang2022bytetrack,
author = {Zhang, Yifu and Sun, Peize and Jiang, Yi and Yu, Dongdong and Weng, Fucheng and Yuan, Zehuan and Luo, Ping and Liu, Wenyu and Wang, Xinggang},
title = {{ByteTrack: Multi-object Tracking by Associating Every Detection Box}},
year = {2022},
isbn = {978-3-031-20046-5},
publisher = {Springer-Verlag},
url = {https://doi.org/10.1007/978-3-031-20047-2_1},
doi = {10.1007/978-3-031-20047-2_1},
booktitle = {2022 European Conferenceon Computer Vision (ECCV)},
pages = {1–21},
numpages = {21},
keywords = {Detection boxes, Data association, Multi-object tracking},
location = {Tel Aviv, Israel}
}

@software{josef_perktold_2023_10378921,
  author       = {Josef Perktold and
                  Skipper Seabold and
                  Kevin Sheppard and
                  ChadFulton and
                  Kerby Shedden and
                  jbrockmendel and
                  j-grana6 and
                  Peter Quackenbush and
                  Vincent Arel-Bundock and
                  Wes McKinney and
                  Ian Langmore and
                  Bart Baker and
                  Ralf Gommers and
                  yogabonito and
                  s-scherrer and
                  Yauhen Zhurko and
                  Matthew Brett and
                  Enrico Giampieri and
                  yl565 and
                  Jarrod Millman and
                  Paul Hobson and
                  Vincent and
                  Pamphile Roy and
                  Tom Augspurger and
                  tvanzyl and
                  alexbrc and
                  Tyler Hartley and
                  Fernando Perez and
                  Yuji Tamiya and
                  Yaroslav Halchenko},
  title={{statsmodels/statsmodels: Release 0.14.1}},
  month        = dec,
  year         = 2023,
  publisher    = {Zenodo},
  version      = {v0.14.1},
  doi          = {10.5281/zenodo.10378921},
  url          = {https://doi.org/10.5281/zenodo.10378921}
}

@article{mcleod2008faster,
author = {McLeod, A. I. and Zhang, Y.},
title = {{Faster ARMA maximum likelihood estimation}},
year = {2008},
issue_date = {January, 2008},
publisher = {Elsevier Science Publishers B. V.},
volume = {52},
number = {4},
issn = {0167-9473},
url = {https://doi.org/10.1016/j.csda.2007.07.020},
doi = {10.1016/j.csda.2007.07.020},
journal = {Computational Statistics and Data Analysis},
month = jan,
pages = {2166–2176},
numpages = {11},
keywords = {Autoregressive approximation, Efficiency of the sample mean, High-order autoregression, Long time series and massive data sets, Maximum likelihood estimator, Quantitative programming environments}
}

@ARTICLE{CCTV-video,
  author={Buch, Norbert and Velastin, Sergio A. and Orwell, James},
  journal={IEEE Transactions on Intelligent Transportation Systems}, 
  title={{A Review of Computer Vision Techniques for the Analysis of Urban Traffic}}, 
  year={2011},
  volume={12},
  number={3},
  pages={920-939},
  keywords={Cameras;Surveillance;Vehicles;Pixel;Roads;Computer vision;Closed-circuit television (CCTV);intersection monitoring;road user counting;road users;traffic analysis;urban traffic;vehicle classification;vehicle detection;visual surveillance},
  doi={10.1109/TITS.2011.2119372}}

@article{Isufi_2017,
   title={{Autoregressive Moving Average Graph Filtering}},
   volume={65},
   ISSN={1941-0476},
   url={http://dx.doi.org/10.1109/TSP.2016.2614793},
   DOI={10.1109/tsp.2016.2614793},
   number={2},
   journal={IEEE Transactions on Signal Processing},
   publisher={Institute of Electrical and Electronics Engineers (IEEE)},
   author={Isufi, Elvin and Loukas, Andreas and Simonetto, Andrea and Leus, Geert},
   year={2017},
   month=jan, pages={274–288} }

@ARTICLE{Shuman2013,
  author={Shuman, David I and Narang, Sunil K. and Frossard, Pascal and Ortega, Antonio and Vandergheynst, Pierre},
  journal={IEEE Signal Processing Magazine}, 
  title={The emerging field of signal processing on graphs: Extending high-dimensional data analysis to networks and other irregular domains}, 
  year={2013},
  volume={30},
  number={3},
  pages={83-98},
  keywords={Tutorials;Biological neural networks;Harmonic analysis;Spectral analysis;Frequency domain analysis},
  doi={10.1109/MSP.2012.2235192}}

@InProceedings{Giraldo2021Moving,
author="Giraldo, Jhony H.
and Javed, Sajid
and Sultana, Maryam
and Jung, Soon Ki
and Bouwmans, Thierry",
editor="Jeong, Hieyong
and Sumi, Kazuhiko",
title="The Emerging Field of Graph Signal Processing for Moving Object Segmentation",
booktitle="Frontiers of Computer Vision",
year="2021",
publisher="Springer International Publishing",
pages="31--45",
isbn="978-3-030-81638-4",
doi="10.1007/978-3-030-81638-4_3"
}

@ARTICLE{Leus2023,
  author={Leus, Geert and Marques, Antonio G. and Moura, José M.F. and Ortega, Antonio and Shuman, David I},
  journal={IEEE Signal Processing Magazine}, 
  title={Graph Signal Processing: History, development, impact, and outlook}, 
  year={2023},
  volume={40},
  number={4},
  pages={49-60},
  keywords={Image processing;Signal processing;Aerospace electronics;Rendering (computer graphics);Mathematical models;Acoustics;Sensors},
  doi={10.1109/MSP.2023.3262906}}

@inproceedings{
dosovitskiy2021an,
title={An Image is Worth 16x16 Words: Transformers for Image Recognition at Scale},
author={Alexey Dosovitskiy and Lucas Beyer and Alexander Kolesnikov and Dirk Weissenborn and Xiaohua Zhai and Thomas Unterthiner and Mostafa Dehghani and Matthias Minderer and Georg Heigold and Sylvain Gelly and Jakob Uszkoreit and Neil Houlsby},
booktitle={International Conference on Learning Representations},
year={2021},
url={https://openreview.net/forum?id=YicbFdNTTy}
}

@INPROCEEDINGS{10725591,
  author={Gaur, Krish and Siddique, Miran Ahmad and Beernally, Krishna and Madaan, Nityam and Tarwani, Sandhya},
  booktitle={2024 International Conference on Computing Communication and Networking Technologies (ICCCNT)}, 
  title={Real-Time Wrong-Way Vehicle Detection System with Automatic Number Plate Recognition for Enhanced Road Safety}, 
  year={2024},
  volume={},
  number={},
  pages={1-8},
  keywords={YOLO;Computer vision;Accuracy;Tracking;Traffic control;Road safety;Real-time systems;Reliability;License plate recognition;Monitoring},
  doi={10.1109/ICCCNT61001.2024.10725591}}
\bibliographystyle{IEEEtran}

\end{document}